\documentclass[lettersize,journal]{IEEEtran}[10pt]
\usepackage{amsmath,amsfonts}
\usepackage{algorithmic}
\usepackage{algorithm}
\usepackage{array}
\usepackage{amsthm}
\usepackage[caption=false,font=normalsize,labelfont=sf,textfont=sf]{subfig}
\usepackage{textcomp}
\usepackage{stfloats}
\usepackage{xcolor}
\usepackage{url}
\newcommand{\Kappa}{\mathcal{K}}
\usepackage{verbatim}
\usepackage{graphicx}
\usepackage{cite}
\usepackage{caption}
\newtheorem{theorem}{Theorem}

\usepackage{mathrsfs}
\usepackage[
    colorlinks=true,    
    linkcolor=blue,     
    citecolor=blue,     
    urlcolor=blue       
]{hyperref}
\begin{document}

\title{Quasi-Periodic Gaussian Process Predictive Iterative Learning Control}
\author{Unnati Nigam\thanks{Unnati Nigam is a Ph.D. student at IITB-Monash Research Academy, IIT Bombay, Mumbai, India.}, Radhendushka Srivastava\thanks{Radhendushka Srivastava is with the Department of Mathematics, IIT Bombay, Mumbai, India.}, Faezeh Marzbanrad, Michael Burke\thanks{Faezeh Marzbanrad and Michael Burke are with the Department of Electrical and Computer Systems Engineering, Monash University, Clayton, Melbourne, Australia.}}

\maketitle

\begin{abstract} 
Repetitive motion tasks are common in robotics, but performance can degrade over time due to environmental changes and robot wear and tear. Iterative learning control (ILC) improves performance by using information from previous iterations to compensate for expected errors in future iterations. This work incorporates the use of Quasi-Periodic Gaussian Processes (QPGPs) into a predictive ILC framework to model and forecast disturbances and drift across iterations. Using a recent structural equation formulation of QPGPs, the proposed approach enables efficient inference with complexity $\mathcal{O}(p^3)$ instead of $\mathcal{O}(i^2p^3)$, where $p$ denotes the number of points within an iteration and $i$ represents the total number of iterations, specially for larger $i$. This formulation also enables parameter estimation without loss of information, making continual GP learning computationally feasible within the control loop. By predicting next-iteration error profiles rather than relying only on past errors, the controller achieves faster convergence and maintains this under time-varying disturbances. We benchmark the method against both standard ILC and conventional Gaussian Process (GP)-based predictive ILC on three tasks, autonomous vehicle trajectory tracking, a three-link robotic manipulator, and a real-world Stretch robot experiment. Across all cases, the proposed approach converges faster and remains robust under injected and natural disturbances while reducing computational cost. This highlights its practicality across a range of repetitive dynamical systems. 
\end{abstract}

\begin{IEEEkeywords}
Model Learning for Control, Probability and Statistical Methods, Machine Learning for Robot Control. 
\end{IEEEkeywords}

\section{Introduction}
ILC is a powerful technique for improving the performance of systems executing repetitive tasks~\cite{survey_ilc}. By leveraging information from previous iterations, ILC aims to reduce tracking errors over successive repetitions, making it particularly suitable for robotic manipulators, autonomous vehicles, and other cyclic dynamical systems \cite{ilc_formula, ilc_book}. ILC has been widely used to improve robotic trajectory tracking and motion control \cite{robot_ilc1,robot_ilc2, robot_app}, as well as in autonomous driving applications \cite{car_app} and industrial manipulators \cite{manipulator_app}.
However, standard ILC approaches only update control inputs reactively using past errors, despite the fact that error patterns often repeat or evolve predictably across iterations. This may lead to slow convergence, especially under disturbances or when errors exhibit quasi-periodic behaviour.

While Quasi-Periodic Gaussian Processes (QPGP) have recently been proposed for efficient modeling of pseudo-periodic signals, they have not been integrated into iterative learning control in a way that supports continual, iteration-by-iteration learning with bounded computation. This work shows how a structural equation form of QPGPs can be embedded directly into predictive ILC, enabling next-iteration error prediction using only the most recent trial, while remaining conditioned on all past data.

We demonstrate the effectiveness of the proposed QPGP-based predictive ILC through three case studies: autonomous vehicle trajectory tracking, a 3-link planar manipulator following a circular trajectory, and a real-world Hello Stretch robot executing a Lissajous curve tracking task. 
These scenarios represent increasingly complex dynamical settings, ranging from simulation to hardware implementation. Across all scenarios, the QPGP-based approach consistently outperforms standard ILC and competing GP-based predictive ILC methods, both under nominal conditions and in the presence of external disturbances.
\begin{figure}
    \centering
\includegraphics[width=0.85\linewidth]{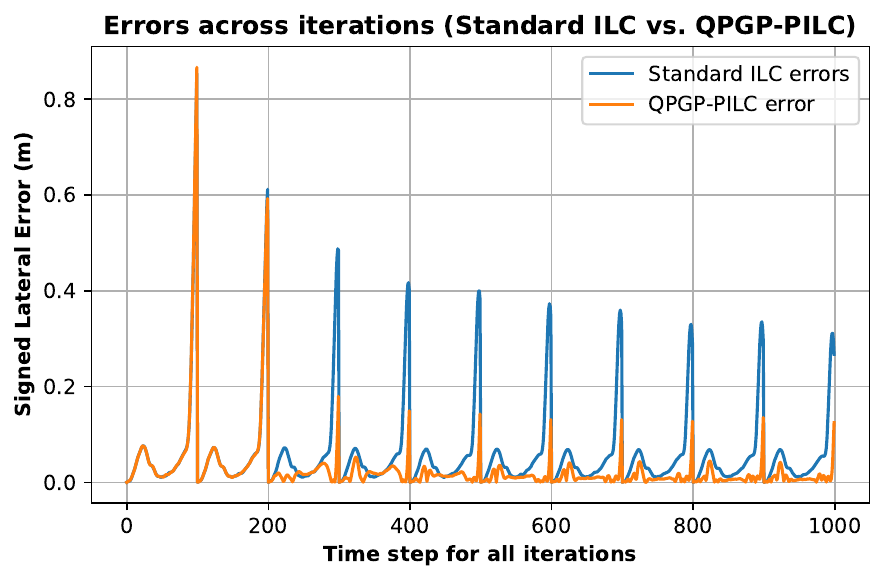}
    \caption{Our key insight is that errors in iterative learning control evolve in a quasi-periodic manner. Incorporating predictive modeling exploiting this structure this into ILC (QPGP-PILC) results in noticeably reduced error magnitude and improved convergence behaviour over standard ILC.\vspace{-5mm}}

    \label{fig:std_ilc_errors}
\end{figure}
The core contributions of our work are:
\begin{itemize}
    \item Showing that iterative learning control results in a quasi-periodic dynamical system (see Fig. \ref{fig:std_ilc_errors}).
    \item A computationally tractable QPGP-based predictive iterative control method that enables continual learning and ongoing, rapid adaptation to changing environments.
    \item A contraction result outlining controller design requirements to ensure mean error convergence when using QPGP predictive iterative learning control.
\end{itemize}

\section{Background}
\subsection{Iterative Learning Control (ILC)} \label{s2}

Consider a system performing a repetitive task over a discrete time interval $t \in \{1,2,\dots,p \}$. Let $\mathbf{u}_i$ denote a control vector of dimension $mp$, and $\mathbf{y}_i$ a system output vector of dimension $np$, respectively. Here, $m$ denotes the control input dimension at each timestep, and $n$ the system output dimension at each timestep. These vectors are formed by stacking all $p$ control inputs for each  of the $k$ control input dimensions, $k = 1,\ldots,m$,  and all $p$ system outputs for each of the $j$ system output dimensions, $j = 1,\ldots,n$, into a lifted form. $i$ corresponds to the $i^\text{th}$ iteration of this repetitive task. The discrete-time closed-loop dynamics of the system over iterations (or trials) can be modeled as follows.
\begin{equation}
    \mathbf{y}_i= g(\mathbf{u}_i)+\mathbf{z}_i, \text{ for all }i \ge 1,\label{system_response}
\end{equation}
where $g:\mathbb{R}^{mp} \to\mathbb{R}^{np}$ is assumed to be continuously differentiable, and is often unknown. Here, $\mathbf{z}_i$ models the disturbances in these dynamics, a zero mean Gaussian process.

Let the tracking error at iteration $i$ be defined as:
\begin{equation}
    \mathbf{e}_{i}=\mathbf{y}_d-\mathbf{y}_{i}, \text{ for all } i\ge1.\label{tracking_error}
\end{equation}
ILC seeks to update the control input for the next iteration to reduce this error. A standard ILC update is given by:
\begin{equation}
   \mathbf{u}_{i+1}=\mathbf{u}_i +\mathbf{L}_i \mathbf{e}_i,\ \forall i\ge 1. \label{ilc_standard_general}
\end{equation}
where $\mathbf{L}_i$ is the learning gain matrix  of dimension $mp \times np$ that determines the influence of the previous iteration’s error $\mathbf{e}_i$ on the updated input.

From \eqref{system_response} and \eqref{ilc_standard_general}, using Taylor's linearization on $g$, we have, 
\begin{align}
    \mathbf{e}_{i+1}&=\mathbf{y}_{d}-\mathbf{y}_{i+1}  =\mathbf{y}_{d}-\left[g(\mathbf{u}_{i})+\mathbf{G}_i\mathbf{L}_i \mathbf{e}_{i}+\mathbf{z}_{i+1}\right]\nonumber\\
    &=(\mathbf{I}_{np}-\mathbf{G}_i\mathbf{L}_i)\mathbf{e}_{i}+ (\mathbf{z}_{i}-\mathbf{z}_{i+1}) \label{e_i_recursion}
\end{align}
where \begin{equation}
    \mathbf{G}_i=\frac{\partial g}{\partial \mathbf{u}}\Bigg \rvert_{\mathbf{u}=\mathbf{u}_i}. \label{jacobian}
\end{equation}

A key insight motivating this work is that ILC generates quasi-periodic error signals, as shown in \eqref{e_i_recursion}. Across each iteration, the error is a perturbed version of the previous error. We demonstrate this graphically in an autonomous racecar path-following task, where steering is the control input and the goal is to minimize tracking error to a fixed reference across iterations. As shown in Fig.~\ref{fig:std_ilc_errors}, this produces quasi-periodic tracking errors across iterations that can be exploited for prediction.

\subsection{Predictive ILC} \label{s3}
Despite its effectiveness, standard ILC updates control inputs reactively using only past tracking errors, which can slow convergence when error patterns repeat or evolve predictably~\cite{survey_ilc}. Predictive ILC (PILC) extends standard ILC by using a model to forecast future errors~\cite{pred_ilc,Huang01012003}. Instead of relying only on past errors, the controller uses predicted errors for the upcoming iteration, $\hat{\mathbf{e}}_{i+1}$, to proactively refine the input~\cite{owens1999prediction}. This explicit compensation accelerates convergence and improves tracking, especially when error patterns repeat or have temporal structure~\cite{owens1999prediction}. The predictive ILC update, similar to \eqref{ilc_standard_general}, is given by 
\begin{equation}
    \mathbf{u}_{i+1} = \mathbf{u}_i +\mathbf{L}_i\mathbf{e}_i + \mathbf{K}_i\hat{\mathbf{e}}_{i+1},
    \label{ilc_predictive_general}
\end{equation}
with $\mathbf{K}_i$ is a predictive learning matrix of dimension $mp \times np$ and $\hat{\mathbf{e}}_{i+1}$ the predicted output error for the $(i+1)^{\text{st}}$ iteration.

Predictive ILC often uses linear models \cite{linear_pilc} from system identification for simplicity and tractability, while nonlinear approximators like neural networks~\cite{PATAN2020445} handle complex dynamics. Gaussian Processes (GPs) with RBF kernels~\cite{gp_ilc} leverage their probabilistic nature to capture structured error patterns and quantify uncertainty, enabling reliable error forecasts, rapid convergence, and improved robustness across iterations. However, GP models face challenges in online or continual learning because the computational cost of parameter estimation and prediction grows cubically with dataset size for standard GP inference (although this can be reduced for Toeplitz matrices~\cite{toeplitz}).
To maintain tractability, strategies like data sub-sampling to inducing points~\cite{sparse_gp1, sparse_gp2, sparse_gp3}, using basis vectors~\cite{HUBER201485}, or discarding past observations are employed. However, these approaches can reduce information and compromise predictive accuracy.

\section{Methodology} \label{s4}
Having shown the inherent quasi-periodicity in ILC, we propose to employ a recently proposed Quasi-Periodic Gaussian Process (QPGP \cite{qpgp_icassp,qpgp_arxiv}) 
to forecast the next-iteration error trajectory in a predictive ILC framework. To the best of our knowledge, this is the first predictive ILC framework that exploits quasi-periodic structure across iterations. The framework admits mean and covariance contraction results, and scales independently of the number of past iterations.

\subsection{Quasi-Periodic Gaussian Processes (QPGP)}\label{s4.1}

A Gaussian process is referred to as QPGP if its sample paths exhibit a quasi-periodic pattern. The covariance structure of a QPGP captures both within-period and between-period variation.  \cite{quasiperiodic} models within-period correlations with a periodic kernel and between-period correlations with geometric decay. However, the prediction of this QPGP is computationally expensive (see \cite{qpgp_arxiv}). One approach to reduce complexity for GP kernels is to view these from a state-space or dynamical systems perspective \cite{state_space_kernel}. 

Along these lines, \cite{qpgp_arxiv} proposed a family of dynamical-system-based QPGP that permits general periodic kernels to model the within-period correlation of the QPGP.
\cite{qpgp_arxiv} also provides a computationally efficient parameter estimation and prediction of the QPGP. In this framework, the within period blocks of a vector $\boldsymbol{\mathsf{x}}_i$, for $i \ge1$ satisfy
\begin{equation}
    \boldsymbol{\mathbf{x}}_{i+1}=\omega \boldsymbol{\mathbf{x}}_{i}+\boldsymbol{\mathbf{\epsilon}}_{i+1} \label{qpgp_str_eqn}
\end{equation}
where $\omega\in(-1,1)$ captures inter-iteration correlation, and $\boldsymbol{\mathbf{\epsilon}}_{i}$ are i.i.d. zero-mean Gaussian vectors.
Stacking $[\boldsymbol{\mathbf{x}}_{1}^\top\ \boldsymbol{\mathsf{x}}_{2}^\top\dots]^\top$ exhibits quasi-periodic behavior. 
This formulation allows prediction of the next iteration using only the most recent error, enabling efficient online updates~\cite{qpgp_arxiv}.

We propose to model the evolving error dynamics under the standard ILC law in \eqref{e_i_recursion} independently for each of the $j=1,2,\dots,n$ output dimensions using \eqref{qpgp_str_eqn}, assuming the $\mathbf{I}_{np}-\mathbf{G}_i\mathbf{L}_i$ term in \eqref{e_i_recursion} is approximately diagonal:
\begin{equation}
    \mathbf{e}_{i+1} \approx (\Omega \otimes I_p)\mathbf{e}_i + \varepsilon_{i+1} \label{eq:approx}
\end{equation}
with  $\varepsilon_{i+1}\sim \mathcal{N}(\mathbf{0},\mathrm{blkdiag}\{\boldsymbol{\Kappa}_1,\ldots,\boldsymbol{\Kappa}_n\})$.
This is reasonable in robotic systems with weak cross-coupling or after suitable coordinate transformation or feedback diagonalization by the inner loop controller, and it simplifies learning gain design while maintaining computational efficiency (Sections~\ref{s4.3}, \ref{s4.2}). Although the disturbance in \eqref{e_i_recursion} is temporally correlated, the dependence typically lasts only one iteration. We model the noise using a zero-mean i.i.d. Gaussian vector with covariance $\boldsymbol{\Kappa}_j$ to provide a simpler representation that captures the dominant and unmodeled residual noise characteristics without introducing unnecessary complexity.

\subsection{Incorporating QPGP in ILC: Prediction} \label{s4.3}
For $j=1,2,\ldots,n$, the error in $j^\text{th}$ output dimension at iteration $i$ corresponds to the $p$-dimensional subvector of $\mathbf{e}_i$ from the index $(j-1)p+1$ to $jp$ and is denoted as follows. 
\begin{align*}
    \mathbf{e}_{i,j}=\mathbf{e}_{i}[(j-1)p+1:jp] =[e_{i,j}(1), e_{i,j}(2), \dots, e_{i,j}(p)]^\top.
\end{align*}
A prediction of each of these, i.e. $\hat{\mathbf{e}}_{i,j}$ is required in \eqref{ilc_predictive_general}. Depending on the compute requirements of the control problem, we can exploit two prediction strategies, given by \cite{qpgp_arxiv}. \textit{Element-wise prediction} predicts each element $\hat{e}_{i+1,j}(t)$ sequentially for all $t = 1, \dots, p$, while \textit{Block prediction} predicts the entire error  $\mathbf{e}_{i+1,j}$ simultaneously. 

\noindent \textbf{Element-Wise Prediction: }In the element-wise prediction approach, given by  \cite{qpgp_arxiv}, each element $e_{i+1}(t)$ can be predicted sequentially
\begin{equation}
\resizebox{0.89\linewidth}{!}{$
\hat{e}_{i+1,j}(t)= \omega_j e_{i,j}(t) + \boldsymbol{\Kappa}_{j;1,t-1}\boldsymbol{\Kappa}_{j;t-1}^{-1} \left(\mathbf{e}_{i+1,j}^{(t-1)} - {\omega}_j\mathbf{e}_{i,j}^{(t-1)}\right)
$}
\label{qpgp_element_predict}
\end{equation}
The vector ${\boldsymbol{\Kappa}}_{j;1,t-1}$ contains the first $t-1$ elements of the first row of ${\boldsymbol{\Kappa}}_j$, while  ${\boldsymbol{\Kappa}}_{j;t-1}$ represents the submatrix formed by the first $t-1$ rows and columns of ${\boldsymbol{\Kappa}}_j$. The terms $\mathbf{e}_{i+1,j}^{(t-1)}$ and $\mathbf{e}_{i,j-1}^{(t-1)}$ correspond to the first $t-1$ elements of $\mathbf{e}_{i+1,j}$ and $\mathbf{e}_{i,j}$ respectively. This expression provides the best predictor of $e_{i+1,j}(t)$ as the conditional mean of $e_{i+1,j}(t)$ given the previous observations  in the same iteration $e_{i+1,j}(t-1),\dots, e_{i+1,j}(1)$ and the observations from previous iterations, $\mathbf{e}_{i,j}$, $\mathbf{e}_{i-1,j}$, \dots, $\mathbf{e}_{1,j}$. For $t=1$, the second term vanishes and $\hat{e}_{i+1,j}(1)=\omega_j e_{i,j}(1)$.

\noindent \textbf{Block Prediction: }In the block prediction approach, the prediction is obtained using
\begin{equation}
    \hat{\mathbf{e}}_{i+1,j}= \omega_j{\mathbf{e}}_{i,j}. \label{qpgp_block_predict}
\end{equation}
This prediction represents the best predictor of $\mathbf{e}_{i+1,j}$, derived as the conditional mean of $\mathbf{e}_{i+1,j}$ given the observations in the previous iterations, $\mathbf{e}_{i,j}$, $\mathbf{e}_{i-1,j}$, \dots, $\mathbf{e}_{1,j}$. 
This is computationally efficient, but does not exploit the covariance structure within an iteration, providing a simple linear prediction.

Notably, these formulations depend only on the immediately preceding error block $\mathbf{e}_{i,j}$, making them computationally lightweight and tractable for online learning settings like ILC. Specifically, the computational complexity of \eqref{qpgp_block_predict} is $\mathcal{O}(p)$, while that of \eqref{qpgp_element_predict} is $\mathcal{O}(p^3)$, where $p$ denotes the number of points in the iteration. In contrast, a conventional GP approach requires $\mathcal{O}(i^2 p^3)$ computations at iteration $i$, which becomes increasingly costly for iterative, online applications. In general, element-wise prediction should be preferred for accuracy, unless the length of each trial prohibits computation at the required control frequency.

It is worth noting that an alternative computationally efficient approach to GP modeling is the use of sparse GPs \cite{sparse_gp2}, which approximate a full GP using $M$ inducing points. For prediction, this reduces the computational complexity to $\mathcal{O}(p M^2)$. When $M>p$, both element-wise and block QPGP-based prediction offer reduced computation. However, even with $M < p$, sparse GPs may be computationally more expensive than a QPGP, as selecting an optimal set of inducing points can itself be challenging; methods such as greedy selection \cite{sparse_greedy}, variational optimization \cite{sparse_varopti}, or k-means based initialization are commonly used, and cross-validation over multiple choices of $M$ may be required, which further increases computational cost. While sparse GPs involve information loss that QPGPs avoid, they could nonetheless be used to model the within-block covariance in the block QPGP framework, offering additional flexibility and efficiency. As shown in our experimental results, element-wise QPGPs generally provide higher accuracy, and block-wise QPGPs achieve better convergence with significantly lower computation time.

\subsection{Incorporating QPGP in ILC: Stability} \label{s4.4}
In predictive ILC, the goal is to anticipate the next iteration’s tracking error rather than relying only on past errors. To achieve this, we use the QPGP  to model (Section~\ref{s4.1}) and predict (Section~\ref{s4.3}) the evolution of error trajectories across iterations. To characterize the behaviour of the proposed QPGP-based predictive ILC, the following theorem presents stability results for oracle prediction models.

\begin{theorem} \label{convergence_theorem}
    Consider the nonlinear discrete-time system given in \eqref{system_response} and define the tracking error as in \eqref{tracking_error}. Assume $g$ is locally linearizable around the current input with its Jacobian given in \eqref{jacobian}. Then, for the two control strategies,
    \begin{enumerate}
    \item \textit{Element-wise prediction:} The tracking error $\mathbf{e}_i \in \mathbb{R}^{np}$ evolves as 
    \begin{equation}
        \mathbf{e}_{i+1}^{(t)}=\mathbf{B}_i \mathbf{e}_i^{(t)}+(\mathbf{z}_i^{(t)}-\mathbf{z}_{i+1}^{(t)}),\forall \  t=1,\dots,p \label{element_recursion}.
    \end{equation}
where, $\mathbf{B}_i=\mathbf{I}_{nt}-\mathbf{G}_i^{(t)}\mathbf{L}_i^{(t)}-\mathbf{G}_i^{(t)} \mathbf{K}_i^{(t)} \mathcal{M}_i^{(t)}$, where $\mathcal{M}_i^{(t)}=\bigoplus_{j=1}^n \mathbf{M}_{i,j}^{(t)}$, $\mathbf{M}_{i,j}^{(1)}=\omega_j$ and for $t=2,\dots,p$, \begin{equation*}
    \mathbf{M}_{i,j}^{(t)} =
    \begin{bmatrix}
    \mathbf{M}_{i,j}^{(t-1)} & \mathbf{0} \\
    \boldsymbol{\Kappa}_{j;1,t-1} \, \boldsymbol{\Kappa}_{j;t-1}^{-1} \big( \mathbf{M}_{i,j}^{(t-1)} - \omega_j \mathbf{I}_{t-1} \big) & \omega_j
    \end{bmatrix}.
\end{equation*}
and $\mathbf{G}_i^{(t)}$, $\mathbf{L}_i^{(t)}$ and $\mathbf{K}_i^{(t)}$ are sub-matrices formed by the first $t$ columns of matrices $\mathbf{G}_i$, $\mathbf{L}_i$ and $\mathbf{K}_i$ respectively. Further, if there exists a constant $\gamma_E \in [0,1)$ and $C>0$, such that $\sup_i || \mathbf{B}_i||_2 \le \gamma_E$ and $\sup_j || \boldsymbol{\Kappa}_j ||_2<C/2$ then for all $t=1,\dots,p$,
\begin{align}
    \lim_{i \to \infty}E\left(\mathbf{e}_{i}^{(t)}\right)=\mathbf{0}, \text{ and } \lim_{i \to \infty}\left \lvert \left \lvert \text{Cov}\left(\mathbf{e}_{i}^{(t)}\right)\right \rvert \right \rvert_2 < C.
\end{align}
        \item \textit{Block prediction:} The tracking error $\mathbf{e}_i \in \mathbb{R}^{np}$, for the $i^{\text{th}}$ iteration, evolves as  \begin{equation}
        \mathbf{e}_{i+1}=\mathbf{A}_i \mathbf{e}_i+(\mathbf{z}_i-\mathbf{z}_{i+1}) \label{block_recursion}.
    \end{equation}
where, $\mathbf{A}_i=\mathbf{I}_{np}-\mathbf{G}_i\mathbf{L}_i-\mathbf{G}_i \mathbf{K}_i (\Omega \otimes \mathbf{I}_p)$ and where $\Omega=\text{diag}\{\omega_1,\dots,\omega_n\}$. Further, if there exists a constant $\gamma_B \in [0,1)$ and $C>0$, such that $\sup_i || \mathbf{A}_i||_2 \le \gamma_B$ and $\max_{1 \le j \le n}
|| \boldsymbol{\Kappa}_j ||_2<C/2$ then
\begin{align}
    \lim_{i \to \infty}E(\mathbf{e}_{i})=\mathbf{0}, \text{ and } \lim_{i \to \infty}||\text{Cov}(\mathbf{e}_{i})||_2 < C
\end{align}
        
\end{enumerate}
\end{theorem}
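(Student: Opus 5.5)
The plan is to treat the two prediction strategies in parallel. In each case we first derive the error recursion by substituting the oracle QPGP predictor into the predictive update \eqref{ilc_predictive_general}, and then linearize exactly as in \eqref{e_i_recursion}. After that, the mean and covariance statements follow from a contraction argument driven by $\sup_i\|\mathbf{A}_i\|_2\le\gamma_B$ (respectively $\sup_i\|\mathbf{B}_i\|_2\le\gamma_E$).

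\textbf{Block case.} Stacking \eqref{qpgp_block_predict} over $j=1,\dots,n$ gives $\hat{\mathbf{e}}_{i+1}=(\Omega\otimes\mathbf{I}_p)\mathbf{e}_i$. Substituting into \eqref{ilc_predictive_general} yields $\mathbf{u}_{i+1}-\mathbf{u}_i=(\mathbf{L}_i+\mathbf{K}_i(\Omega\otimes\mathbf{I}_p))\mathbf{e}_i$. A first-order Taylor expansion of $g$ at $\mathbf{u}_i$ then gives
\[
\mathbf{y}_{i+1}=g(\mathbf{u}_i)+\mathbf{G}_i(\mathbf{L}_i+\mathbf{K}_i(\Omega\otimes\mathbf{I}_p))\mathbf{e}_i+\mathbf{z}_{i+1}.
\]
Writing $g(\mathbf{u}_i)=\mathbf{y}_d-\mathbf{e}_i-\mathbf{z}_i$ produces \eqref{block_recursion} directly.

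\textbf{Element-wise case.} Here we first show by induction on $t$ that the vector of predictions $\hat{\mathbf{e}}_{i+1,j}^{(t)}$ equals $\mathbf{M}_{i,j}^{(t)}\mathbf{e}_{i,j}^{(t)}$ when the oracle regime is used, i.e., the within-iteration residual $\mathbf{e}_{i+1,j}^{(t-1)}-\omega_j\mathbf{e}_{i,j}^{(t-1)}$ in \eqref{qpgp_element_predict} is evaluated along the predicted values. The base case is $t=1$, which is immediate since $\hat e_{i+1,j}(1)=\omega_j e_{i,j}(1)$. For the inductive step, the $t$-th entry is $\omega_j e_{i,j}(t)+\boldsymbol{\Kappa}_{j;1,t-1}\boldsymbol{\Kappa}_{j;t-1}^{-1}(\mathbf{M}_{i,j}^{(t-1)}-\omega_j\mathbf{I}_{t-1})\mathbf{e}_{i,j}^{(t-1)}$, which is exactly the last row of the block lower-triangular recursion defining $\mathbf{M}_{i,j}^{(t)}$. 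Taking the direct sum over $j$ gives $\mathcal{M}_i^{(t)}$. The same Taylor substitution, restricted to the first $t$ coordinates via the truncated matrices $\mathbf{G}_i^{(t)},\mathbf{L}_i^{(t)},\mathbf{K}_i^{(t)}$, then gives \eqref{element_recursion}. Causality, meaning that the first $t$ outputs depend only on the first $t$ inputs, is what justifies the truncation.

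\textbf{Mean and covariance.} For the mean, take expectations in \eqref{block_recursion}, treating $\mathbf{A}_i$ as deterministic (oracle parameters, Jacobians along the nominal trajectory). Since $E\mathbf{z}_i=0$, we get $E\mathbf{e}_{i+1}=\mathbf{A}_iE\mathbf{e}_i$, hence $\|E\mathbf{e}_{i}\|_2\le\gamma_B^{i-1}\|E\mathbf{e}_1\|_2\to0$. The element-wise case is identical with $\gamma_E$.

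For the covariance, unroll the recursion: $\mathbf{e}_{i+1}=\Phi_{i,1}\mathbf{e}_1+\sum_{k=1}^{i}\Phi_{i,k+1}(\mathbf{z}_k-\mathbf{z}_{k+1})$, where $\Phi_{i,k}=\mathbf{A}_i\cdots\mathbf{A}_k$ and $\|\Phi_{i,k}\|_2\le\gamma_B^{i-k+1}$. Regroup by the individual $\mathbf{z}_k$, each of which appears at most twice with coefficients $\Phi_{i,k}-\Phi_{i,k+1}=\Phi_{i,k+1}(\mathbf{A}_k-\mathbf{I})$. Then use that, under the modeling assumption of \eqref{eq:approx}, the noise increments are independent with covariance block-diagonal in $\boldsymbol{\Kappa}_j$, so that its spectral norm is $\max_j\|\boldsymbol{\Kappa}_j\|_2<C/2$. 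This yields a geometric series bound of the form
\[
\limsup_i\|\mathrm{Cov}(\mathbf{e}_i)\|_2\le \frac{\max_j\|\boldsymbol{\Kappa}_j\|_2}{1-\gamma_B^2}\cdot c,
\]
and the $\Phi_{i,1}\mathbf{e}_1$ term vanishes.

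\textbf{Main obstacle.} The difficult step is getting the covariance constant to land strictly below $C$ rather than some multiple of $C/(1-\gamma^2)$. My first attempt would be to adopt the paper's own modeling convention and replace the increment $\mathbf{z}_i-\mathbf{z}_{i+1}$ by the i.i.d. innovation $\varepsilon_{i+1}$ with covariance $\mathrm{blkdiag}\{\boldsymbol{\Kappa}_j\}$. This gives the fixed-point bound $\|\Sigma\|\le\gamma^2\|\Sigma\|+C/2$. If that is still not enough, I would interpret the result as a boundedness statement and absorb the factor into $C$, noting the dependence on $\gamma$.
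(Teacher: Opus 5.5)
\textbf{Recursions and mean.} Your derivation of the two recursions and of the mean convergence follows the paper's argument, done more carefully. The paper simply asserts $\hat{\mathbf{e}}_{i+1}^{(t)}=\mathcal{M}_i^{(t)}\mathbf{e}_i^{(t)}$. It also silently uses the hypotheses you make explicit: deterministic $\mathbf{A}_i,\mathbf{B}_i$, and time-causality. Causality must hold for the products $\mathbf{G}_i\mathbf{L}_i$ and $\mathbf{G}_i\mathbf{K}_i$, not only for $g$. Your induction in fact shows $\mathbf{M}_{i,j}^{(t)}=\omega_j\mathbf{I}_t$, because $\mathbf{M}_{i,j}^{(1)}-\omega_j=0$ kills every off-diagonal block. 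So case 1 is just case 2 truncated to the first $t$ samples of each output.

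\textbf{Covariance.} The step you flagged is a genuine gap, but it cannot be closed: $\lim_i\|\mathrm{Cov}(\mathbf{e}_i)\|_2<C$ does not follow from the hypotheses.

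The paper reaches it by passing from $\mathbf{P}_{i+1}=\mathbf{A}_i\mathbf{P}_i\mathbf{A}_i^\top+2\bigoplus_j\boldsymbol{\Kappa}_j$ to $\|\mathbf{P}_{i+1}\|_2\le\gamma_B^{2i}\|\mathbf{P}_i\|_2+\max_j\|\boldsymbol{\Kappa}_j\|_2$. That recursion only yields the factors $\gamma_B^{2}$ and $2\max_j\|\boldsymbol{\Kappa}_j\|_2$, hence the fixed point $2\max_j\|\boldsymbol{\Kappa}_j\|_2/(1-\gamma_B^2)$. The recursion itself is also wrong: it omits the cross term from $\mathrm{Cov}(\mathbf{e}_i,\mathbf{z}_i)=-\Sigma$, where $\Sigma=\bigoplus_j\boldsymbol{\Kappa}_j$. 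The correct one-step form is $\mathbf{P}_{i+1}=\mathbf{A}_i\mathbf{P}_i\mathbf{A}_i^\top+2\Sigma-\mathbf{A}_i\Sigma-\Sigma\mathbf{A}_i^\top$. Your regrouping by individual $\mathbf{z}_k$ handles this correctly.

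Carried out, your route gives exactly $\mathrm{Cov}(\mathbf{e}_{i+1})=\Sigma+\sum_{k=1}^{i}\Phi_{i,k+1}(\mathbf{I}-\mathbf{A}_k)\Sigma(\mathbf{I}-\mathbf{A}_k)^\top\Phi_{i,k+1}^\top$; the $-\mathbf{z}_1$ inside $\mathbf{e}_1$ supplies the $k=1$ term. Hence $\sup_i\|\mathrm{Cov}(\mathbf{e}_i)\|_2\le 2\max_j\|\boldsymbol{\Kappa}_j\|_2/(1-\gamma_B)$, i.e.\ your constant is $c=2(1+\gamma_B)$.

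This bound is sharp. Take $n=p=1$, $g(u)=u$, gains giving $A_i\equiv-\gamma_B$, and $z_i$ i.i.d.\ $\mathcal{N}(0,\kappa)$. Then $\mathrm{Var}(e_i)\to 2\kappa/(1-\gamma_B)$, which equals $1.6C>C$ for $\gamma_B=1/2$ and $\kappa=0.4C$. Even the paper's own recursion with $A_i\equiv\gamma_B$ gives $2\kappa/(1-\gamma_B^2)>C$ for $\kappa$ near $C/2$.

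So drop your first attempt. It alters the noise model and still only gives $C/(2(1-\gamma^2))$, which is below $C$ only when $\gamma^2\le 1/2$. Instead, state the covariance conclusion as the $\gamma_B$-dependent bound above, with $\limsup$ in place of $\lim$ since $\mathbf{A}_i$ may vary with $i$.
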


\begin{proof}[\unskip\nopunct]
\renewcommand{\qedsymbol}{}
Using \eqref{system_response} and \eqref{ilc_predictive_general}, and using the Taylor's linearization on $g$, we have, 
\begin{align}
    \mathbf{e}_{i+1}&= (\mathbf{I}_{np}-\mathbf{G}_i\mathbf{L}_i)\mathbf{e}_{i}- \mathbf{G}_i \mathbf{K}_i\hat{\mathbf{e}}_{i+1}+(\mathbf{z}_{i}-\mathbf{z}_{i+1}) \label{e_i_recursion_pred}
\end{align}

\noindent For 1) \textit{Element-wise prediction:} We have, from \eqref{qpgp_element_predict}, 
    \begin{equation*}
       \hat{\mathbf{e}}_{i+1}^{(t)}= \mathcal{M}_i^{(t)}\mathbf{e}_{i}^{(t)}
    \end{equation*}
Then, for $t=1,2,\dots,p$,
\begin{align*}
    \mathbf{e}_{i+1}^{(t)}&= \left(\mathbf{I}_{nt}-\mathbf{G}_i^{(t)}\mathbf{L}_i^{(t)}-\mathbf{G}_i^{(t)} \mathbf{K}_i^{(t)} \mathcal{M}_i^{(t)}\right)\mathbf{e}_{i}^{(t)}\\
    &+\left(\mathbf{z}_{i}^{(t)}-\mathbf{z}_{i+1}^{(t)}\right) 
\end{align*}
Then, as $E\left(\mathbf{z}_i^{(t)}\right)=\mathbf{0}$, for all $i \ge1$, then
\begin{equation}
    E\left(\mathbf{e}^{(t)}\right)=\mathbf{B}_i^{(t)} E\left(\mathbf{e}_{i}^{(t)}\right)
\end{equation}
If there exists $\gamma_E \in [0,1)$ such that $\sup_i\big \lvert \big \lvert \mathbf{B}_i^{(t)}\big \rvert \big \rvert_2 \le \gamma_E$ then 
\begin{equation}
    E\left(\mathbf{e}_{i+1}^{(t)}\right)\le \gamma_B^i E\left(\mathbf{e}_{i}^{(t)}\right) \to \mathbf{0} 
\end{equation}
Let $\mathbf{P}_i^{(t)}=\text{Cov}\left(\mathbf{e}_i^{(t)}\right)$. Then, 
\vskip-5mm
\begin{align*}
\mathbf{P}_{i+1}^{(t)}&=\mathbf{B}_i^{(t)}\mathbf{P}_i^{(t)} {\mathbf{B}_i^{(t)}}^\top +  2\bigoplus_{j=1}^{n} \boldsymbol{\Kappa}_{j;t}
\end{align*}
\vskip-3mm
\noindent Taking the norm on both sides, we have
\begin{equation}
    \big \lvert \big \lvert \mathbf{P}_{i+1}^{(t)} \big \rvert \big \rvert_2\le\gamma_E^{2i}  \big \rvert \big \rvert\mathbf{P}_i \big \rvert \big \rvert_2+ \max_{1\le j\le n} \big \lvert \big \lvert \boldsymbol{\Kappa}_{j;t} \big \rvert \big \rvert_2
\end{equation}
As $\gamma_E\in [0,1)$ and $||\boldsymbol{\Kappa}_{j;t}||_2<C/2, $ for all $j=1,\dots, n$, 
\begin{equation}
         \lim_{i \to \infty} \left \lvert \left \lvert\text{Cov}\left(\mathbf{e}_{i}^{(t)}\right) \right \rvert \right \rvert_2<C, \text{ for all }t=1,2,\dots,p.
\end{equation}

\noindent For 1) \textit{Block prediction:} we have, 

        $\hat{\mathbf{e}}_{i+1}= (\Omega \otimes \mathbf{I}_p)\mathbf{e}_{i}$.

Then 
\begin{equation}
    \mathbf{e}_{i+1}= (\mathbf{I}_{np}-\mathbf{G}_i\mathbf{L}_i-\mathbf{G}_i \mathbf{K}_i (\Omega \otimes \mathbf{I}_p))\mathbf{e}_{i}+(\mathbf{z}_{i}-\mathbf{z}_{i+1}) \label{e_i_recursion_block}
\end{equation}
Then, as $E(\mathbf{z}_i)=\mathbf{0}$, for all $i \ge1$, then
\begin{equation}
    E(\mathbf{e}_{i+1})=\mathbf{A}_i E(\mathbf{e}_{i})
\end{equation}
If there exists $\gamma_B \in [0,1)$ such that $\sup_i|| \mathbf{A}_i||_2 \le \gamma_B$ then 
\begin{equation}
    E(\mathbf{e}_{i+1})\le \gamma_B^i E(\mathbf{e}_{i}) \to \mathbf{0} 
\end{equation}
Let $\mathbf{P}_i=\text{Cov}(\mathbf{e}_i)$. Then, \vskip-5mm
\begin{align*}
\mathbf{P}_{i+1}&=\mathbf{A}_i\mathbf{P}_i \mathbf{A}_i^\top +  2\bigoplus_{j=1}^{n} \boldsymbol{\Kappa}_j
\end{align*}
Taking the norm on both sides, we have
\begin{equation}
    || \mathbf{P}_{i+1}||_2\le\gamma_B^{2i} ||\mathbf{P}_i||_2+ \max_{1\le j\le n}|| \boldsymbol{\Kappa}_j||_2
\end{equation}
As $\gamma_B\in [0,1)$ and $||\boldsymbol{\Kappa}_j||_2<C/2, $ for all $j=1,\dots, n$, 
\begin{equation}
         \lim_{i \to \infty}||\text{Cov}(\mathbf{e}_{i})||_2<C
\end{equation}
\end{proof} \vskip-5mm
The theorem first establishes stability at the level of the mean behavior of the tracking error. When the lifted iteration dynamics are contractive, the expected tracking error decreases from one iteration to the next, demonstrating that the controller consistently corrects past mistakes and refines the control input. As a result, the system learns the desired trajectory in an average sense despite the presence of modeling inaccuracies and stochastic perturbations. This guarantees that the learning mechanism is not merely reactive but progressively improves performance over repeated trials.

At the second-moment level, the theorem ensures that the uncertainty associated with the tracking error does not grow unbounded. By requiring a uniform spectral bound on the kernel covariance matrices, the predictive component is prevented from amplifying stochastic disturbances as the iterations evolve. Consequently, the error covariance remains bounded, which implies mean-square stability of the learning process and ensures that variability around the desired trajectory stays controlled rather than escalating over time.

As is standard in ILC, practical implementation the contraction condition and covariance bounds translate directly into criteria for selecting the learning gain $\mathbf{L}_i$ and the predictive gain $\mathbf{K}_i$ for $i^{\text{th}}$ iteration. When $\mathbf{G}_i$ is known, gains can be tuned relatively aggressively so that the induced iteration operator remains contractive while the associated covariance operators are uniformly bounded. If this is not known, more cautious gain choices can be made (eg. by decaying the learning rate over time).

\subsection{Incorporating QPGP in ILC: Parameter Estimation } \label{s4.2}

We adopt the computationally efficient method of obtaining consistent estimators of the parameters $\omega_j$ and $\boldsymbol{\Kappa}_j$ from \cite{qpgp_arxiv}. This uses a two-stage algorithm, for every output dimension $j=1,\dots,n$. In Stage I, an iterative, alternating minimization is applied to a reduced likelihood function, which excludes the  marginal contribution of the initial error block $\mathbf{e}_{1,j}$. This iteratively estimates $\omega_j$ and an unconstrained covariance matrix, $\tilde{\boldsymbol{\Kappa}}_{j}$. In Stage II, the estimated covariance matrix $\tilde{\boldsymbol{\Kappa}}_j$ is projected onto the set of valid periodic covariance kernels by averaging along the diagonals and ensuring positive definiteness through spectral truncation  \cite{peterhall}. The resulting estimates, $\hat{\omega}_j$ and $\hat{\boldsymbol{\Kappa}}_j$ are consistent, converging in probability to the true parameters as the number of iterations increase. Further, if the covariance function is from a known parametric family, the hyperparameters of the covariance function can be calculated using 
\begin{equation}\label{parameteric_kappa}
(\hat{\sigma}_j^2, \hat{\boldsymbol{\theta}}_j)=\text{arg min}_{(\sigma^2, \boldsymbol{\theta})\in (0,\infty)\times \Theta}\|\tilde{\boldsymbol{\Kappa}}_j-\boldsymbol{\Kappa}(\sigma^2, \boldsymbol{\theta})\|_{F}.
\end{equation}
In the absence of prior structural knowledge of the parametric family, the estimate $\hat{\boldsymbol{\Kappa}}_j$ is retained directly and treated as a \textit{general kernel} obtained purely through a data-driven process.

In the predictive ILC setting, the two-stage estimation procedure naturally integrates into the iterative update framework. After iteration $i$, the new tracking errors 
$\mathbf{e}_{i,j}$
are incorporated into the training dataset, for all $j=1,2,\dots,n$. The aggregated training data up to iteration $i$ is thus given by $ [\, \mathbf{e}_{1,j}^\top, \dots, \mathbf{e}_{i,j}^\top \,]^\top$ for output index $j$.
Using this updated dataset, the parameters are re-estimated via the two-stage procedure to obtain $\hat{\omega}^{(i)}_j$ and $\hat{\boldsymbol{\Kappa}}^{(i)}_j$.  We initialize the estimation process with parameters from the previous trial. 
These updated estimates are then employed for prediction after each iteration. 

Since prediction is implemented using plug-in estimates $\hat{\omega}^{(i)}_j$ and $\hat {\boldsymbol{\Kappa}}^{(i)}_j$, finite-sample estimation error can introduce predictor mismatch relative to the ideal QPGP conditional mean. Accordingly, when tuning $\mathbf{L}_i$ and $\mathbf{K}_i$ in practice, we recommend maintaining a nontrivial contraction margin in Theorem~2 (e.g., selecting gains so that $\|\mathbf{A}_i\|_2$ or $\|\mathbf{B}_i\|_2$ are comfortably below $1$) to ensure robust stability under this mismatch.

\section{Experiments}
We compare the performance of the QPGP-based predictive ILC (QPGP-PILC) framework with Standard ILC and GP-based predictive ILC (GP-PILC) using three scenarios.

\textbf{A vehicle} following a predefined path, with QPGP-PILC anticipating recurring steering errors and correcting these via block- and element-wise predictions using \eqref{qpgp_block_predict} and \eqref{qpgp_element_predict}.

\textbf{A three-link robotic manipulator} tracking a repetitive trajectory, where QPGP-PILC improves tracking accuracy and convergence. An external disturbance midway evaluates the method’s robustness against unexpected perturbations.

\textbf{A Stretch Robot} tracking problem,  which shows QPGP-PILC effectively handles uncertainties and actuator noise while achieving faster convergence and lower tracking errors.

For each case, we analyze tracking error, convergence speed, and computational efficiency, demonstrating the clear advantages of QPGPs for predictive error correction in ILC.

\subsection{Vehicle Trajectory Tracking} \label{s5}
Using the experimental procedure in Appendix~\ref{appendix:vehicle}, we evaluate convergence and tracking performance of Standard ILC, GP-PILC, and QPGP-PILC controllers. The feedforward steering input $(\delta_k^{\mathrm{ff}})$ is updated iteratively according to each ILC variant \eqref{ilc_standard_general} or \eqref{ilc_predictive_general}, allowing assessment of predictive modeling effects on error reduction and trajectory accuracy.

Standard ILC updates the input solely using the previous iteration’s tracking error \eqref{ilc_standard_general}. Two predictive ILC variants are considered \eqref{ilc_predictive_general}. GP-PILC employs a Gaussian Process with an RBF kernel to model the expected error. Although it converges faster than Standard ILC, it requires the complete error history, resulting in high computational cost (Table~\ref{car_time}). A Sparse GP with $M=100$ optimized inducing points reduces computation, achieving runtimes closer to QPGP-PILC, but its prediction accuracy remains lower.

\begin{table}
    \centering
        \caption{Total Computation time (in secs.) for 100 iterations for Standard ILC, GP-PILC, Sparse GP-PILC and QPGP-PILC (block and element-wise).} {\begin{tabular}{|c|c|c|c|c|c|}
    \hline
        \textbf{Approach}& Standard &  QPGP & QPGP &Sparse  &GP  \\
        & & (block)& (element) &GP&\\
        \hline
         \textbf{Time (secs.)}&0.23&18.74& 19.31&19.67&649.23 \\ \hline
    \end{tabular}}

    \label{car_time}
\end{table}

 \begin{figure}
    \centering
        
        \includegraphics[width=\linewidth]{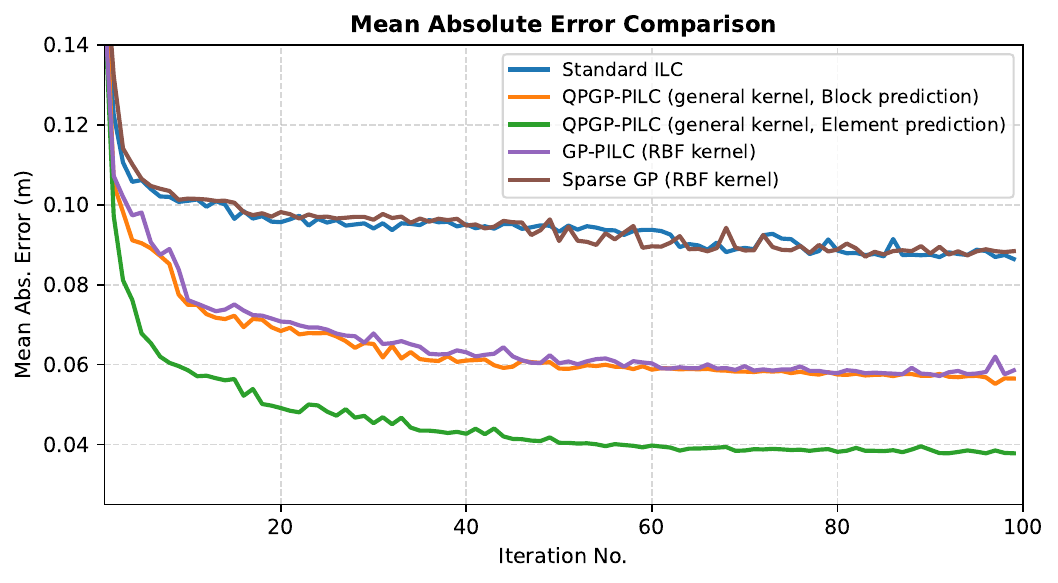}
    \caption{Comparison of error convergence for vehicle trajectory tracking for Standard ILC, GP-PILC, Sparse GP-PILC, and QPGP-PILC (block and element-wise). Element-wise QPGP-PILC converges fastest, followed by block-based QPGP-PILC. GP-PILC shows similar convergence, while Sparse GP-PILC converges slightly slower. QPGP-PILC consistently achieves the most accurate tracking.}
    \label{fig:car_error}
\end{figure}

QPGP-PILC leverages the quasi-periodic structure of the error signal, capturing recurring patterns while using only the most recent error block. Both block- and element-wise predictors are evaluated, consistently achieving higher accuracy than full or sparse GP ILC. Simulation results (Fig.~\ref{fig:car_error}) show GP- and QPGP-based methods converge faster than Standard ILC, with the element-wise QPGP predictor converging faster than the block-based variant at moderate additional cost (Table~\ref{car_time}). Unlike GP-based methods, QPGP-PILC avoids storing full error histories, greatly reducing computation.

At the 50th iteration (Fig.~\ref{fig:car_traj}), QPGP-PILC trajectories align closely with the reference path, whereas Standard ILC shows visible deviations, demonstrating QPGP-PILC’s ability to exploit quasi-periodic patterns for faster convergence and improved tracking accuracy.

 \begin{figure}
    \centering
    \includegraphics[width=0.8\linewidth]{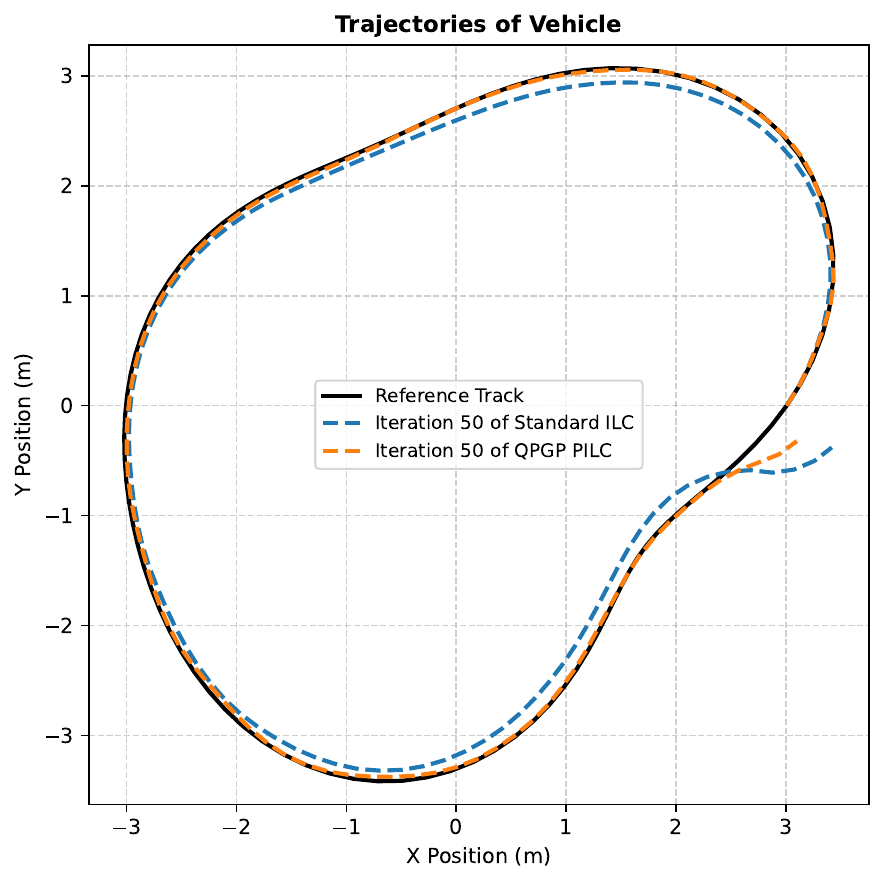}
    \caption{Comparison of vehicle trajectories at the 50th iteration under Standard ILC and QPGP-PILC controllers. The QPGP-PILC trajectory closely follows the reference, whereas the Standard ILC trajectory shows visible deviations.}
    \label{fig:car_traj}
\end{figure}

\subsection{Manipulator Trajectory Tracking} \label{s6}
\begin{figure}
    \centering
    \includegraphics[width=\linewidth]{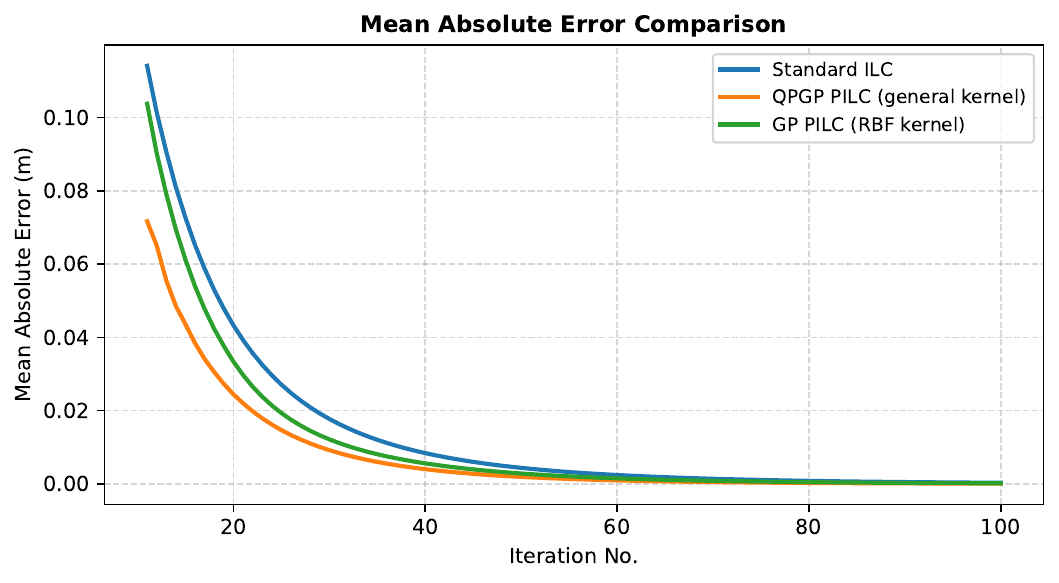}
    \caption{Comparison of error convergence for manipulator between Standard ILC, QPGP-PILC, and GP-PILC. QPGP-PILC achieves the fastest convergence, followed by GP-PILC and then Standard ILC. In addition to its superior convergence rate, QPGP-PILC is computationally more efficient, as it requires only the most recent iteration’s errors for predictions.}
    \label{fig:manipulator_error}
\end{figure} 

\begin{figure}
    \centering
    \includegraphics[width=\linewidth]{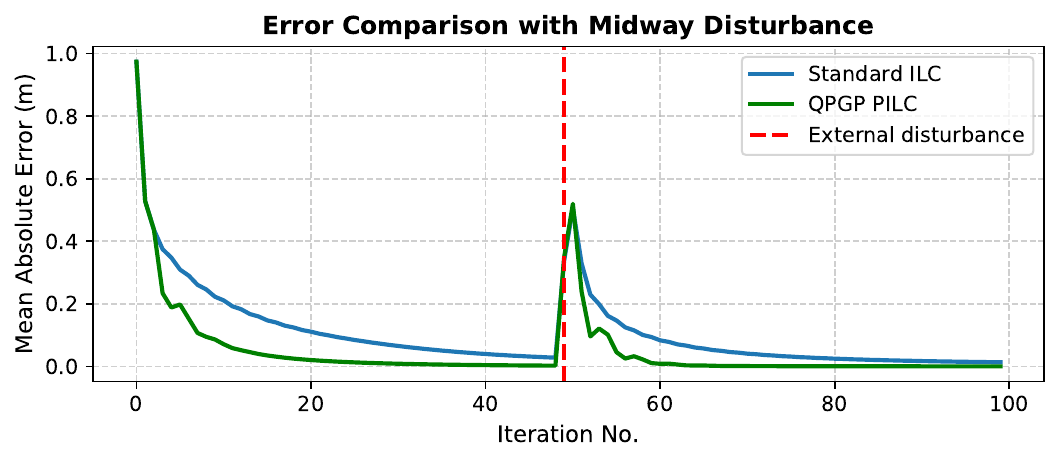}
    \caption{Response of the manipulator to a mid-iteration disturbance under Standard ILC and QPGP-PILC. Standard ILC reacts slowly, while QPGP-PILC anticipates and corrects deviations, recovering to pre-disturbance error levels much faster.}
    \label{fig:bias_manipulator}
\end{figure}

The manipulator setup is detailed in Appendix~\ref{appendix:manipulator}. Feedforward joint commands are updated through the Jacobian using \eqref{ilc_standard_general} for Standard ILC and \eqref{ilc_predictive_general} for predictive ILC.

Standard ILC updates joint angles solely using the previous iteration’s errors, reducing errors slowly. Predictive ILC methods, including GP- and QPGP-PILC, model error dynamics to anticipate deviations. GP-PILC uses the full history of joint errors, whereas QPGP-PILC leverages quasi-periodic patterns and requires only the most recent block for prediction. 

Simulations show that both GP- and QPGP-PILC converge faster than Standard ILC (Fig \ref{fig:manipulator_error}), with QPGP-PILC achieving the fastest error reduction and closely tracking the reference trajectory.
This highlights the benefit of predictive probabilistic modeling in multi-joint learning using QPGPs.

A mid-iteration disturbance was introduced to test robustness. Standard ILC reacts slowly, taking several iterations to recover. QPGP-PILC, using learned quasi-periodic error patterns, quickly corrects the perturbation and restores pre-disturbance accuracy, demonstrating robust and reliable trajectory tracking. (Fig. \ref{fig:bias_manipulator}). This suggests that the diagonal approximation in \eqref{eq:approx} remains effective even in moderately coupled robotic systems.

\begin{figure}
    \centering
    \includegraphics[width=\linewidth]{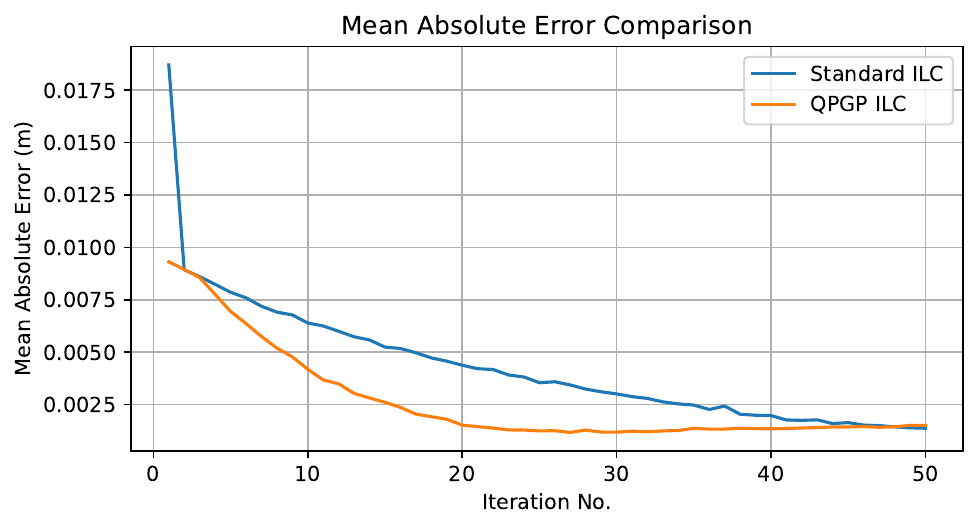}
    \caption{Convergence of tracking error for Hello Robot Stretch 3 over iterations for standard ILC and QPGP-based PILC. The QPGP-PILC achieves faster convergence than Standard ILC.  }
    \label{robot_error}
\end{figure}

\begin{figure}
    \centering
    \includegraphics[width=\linewidth]{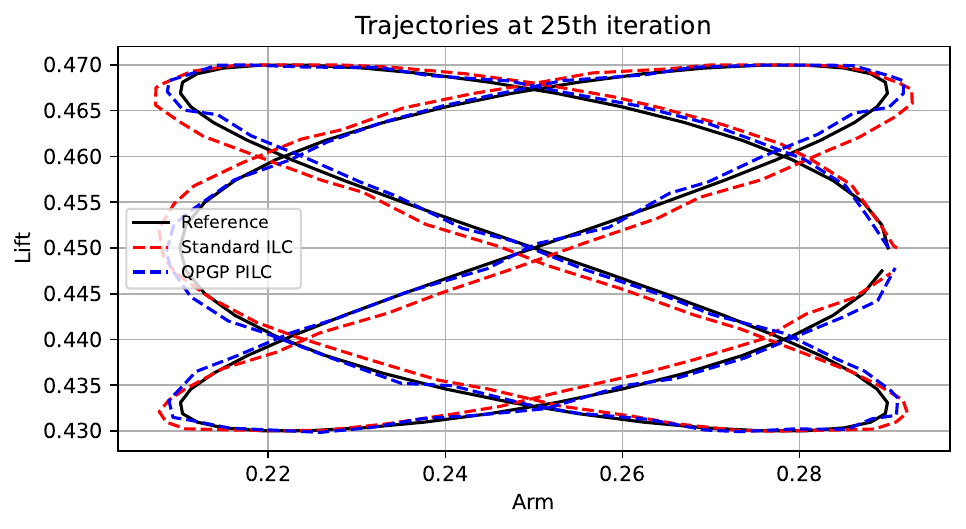}
    \caption{Tracking performance of Hello Robot Stretch 3 at 25th iteration under Standard ILC and QPGP-PILC controllers. The QPGP trajector is closer to the reference path compared to the Standard ILC trajectory, which exhibits noticeable deviations.}
    \label{robot_traj}
\end{figure}

\subsection{Stretch Movement} \label{s7}

ILC has been used to improve robotic trajectory tracking \cite{robot_ilc1,robot_ilc2}. As a final experiment, we demonstrate the use of QPGPs on a real robot, the Hello Robot Stretch 3. The Stretch robot, being a servo- and cable-driven arm with inherent backlash, serves as a realistic platform to test iterative learning control for error compensation. For evaluation, the robot repeatedly executed a Lissajous trajectory using only the lift and arm joints (Appendix~\ref{appendix:robot}). Standard ILC updates inputs reactively from previous errors, while QPGP-PILC anticipates deviations by modeling the quasi-periodic error structure.

QPGP-PILC converges faster than Standard ILC, effectively capturing the repeated Lissajous motion and reducing errors more quickly and stably. Its trajectory remains closer to the reference path, demonstrating improved tracking accuracy and confirming its practical advantages for learning-based control.  

\section{Conclusion} \label{s8}
We proposed a QPGP-based predictive iterative learning control (PILC) framework to improve trajectory tracking in repeated robotic tasks. Analysis of standard ILC errors revealed quasi-periodic patterns not explicitly captured by conventional methods. By integrating quasi-periodic Gaussian processes, the framework estimates system parameters and forecasts future trajectory errors, enabling proactive control adjustments.

QPGP-PILC is well-suited for non-stationary continual learning, handling slowly varying dynamics or environmental changes efficiently using only the previous iteration’s errors. The proposed approach showed faster convergence and improved tracking error across multiple environments. Overall, QPGP-PILC offers a powerful approach for enhancing performance in repetitive robotic and industrial tasks requiring continual  learning with fixed memory and fixed computational complexity.

\bibliographystyle{ieeetr}
\bibliography{refs_ilc}
\appendix
\section{Appendix A: Simulation Environment and Experimental Setup}

The code for all three experiments described in this paper is available at our GitHub repository.\footnote{\url{https://github.com/unnati-nigam/QPGP-PILC}}
\subsection{Vehicle Experimental Setup} \label{appendix:vehicle}

\textbf{Reference Raceline Generation: }A smooth closed racetrack is generated using a polar parameterization with $s \in [0, 2\pi]$ and $p$ uniformly spaced points.
\begin{equation}
R(s) = 10 + 2\sin(2s) + \sin(3s),
\end{equation}
and the corresponding Cartesian coordinates are
\begin{equation}
x_{\mathrm{ref}}(s) = R(s)\cos(s), \quad y_{\mathrm{ref}}(s) = R(s)\sin(s).
\end{equation}
Discretized points are $s_j = \frac{2\pi j}{p}$, $j=1,\dots,p$, and the curve is scaled so the total length equals $L_d = v,\Delta t,p$, where $v$ is vehicle speed and $\Delta t$ is the timestep.

\textbf{Vehicle Dynamics per Lap:} For lap $i$ and step $k$, the vehicle state $(x^{(i)}_k, y^{(i)}_k, \theta^{(i)}k)$ evolves as:
\begin{align}
x^{(i)}_{k+1} &= x^{(i)}_k + v \cos(\theta^{(i)}_k)\,\Delta t, \label{eq:x}\\
y^{(i)}_{k+1} &= y^{(i)}_k + v \sin(\theta^{(i)}_k)\,\Delta t, \label{eq:y}\\
\theta^{(i)}_{k+1} &= \theta^{(i)}_k + \frac{v}{L}\tan(\delta^{(i)}_k)\,\Delta t + d_\theta, \label{eq:theta}
\end{align}
where $v=8$m/s is the constant longitudinal velocity, $W=0.5$m is the wheelbase, and $d_\theta=0.04$ is a small heading drift. The steering input is
\begin{equation}
\delta^{(i)}_k = g\bigl(\delta^{\mathrm{ff},(i)}_k + \delta^{\mathrm{fb},(i)}_k\bigr) + b_0 + b_1 k + \varepsilon_k^{(i)},
\end{equation}
where $g=0.7$ is the steering gain, $b_0=0.15$ is a constant bias, $b_1= 0.01$ is the slope of a slowly increasing drift, and $\varepsilon_k^{(i)} \sim \mathcal{N}(0,\sigma^2)$, with $\sigma^2= 0.015$, is Gaussian noise. The steering angle is saturated to $|\delta^{(i)}_k|\le 0.5$ radians.

\textbf{Feedback Control:} 
The feedback steering $\delta^{\mathrm{fb},(i)}_k$ is computed using a pure pursuit geometric controller \cite{pure_pursuit}. Let $(x_{\mathrm{ref},k+1}, y_{\mathrm{ref},k+1})$ be the next reference point:
\begin{eqnarray}
&\Delta x_k = x_{\mathrm{ref},k+1} - x^{(i)}_k, \ \ \
\Delta y_k = y_{\mathrm{ref},k+1} - y^{(i)}_k,\\
&\psi_k = \tan^{-1}\left(\frac{\Delta y_k}{\Delta x_k}\right), 
\alpha_k = \psi_k - \theta^{(i)}_k,\\
&\delta^{\mathrm{fb},(i)}_k = \tan^{-1}\left(\frac{2 L \sin(\alpha_k)}{\sqrt{\Delta x_k^2 + \Delta y_k^2}}\right).
\end{eqnarray}

\textbf{Lateral Error Computation:}
The signed lateral deviation of the vehicle from the reference trajectory at step $k$ is
\begin{equation}
e^{(i)}_k = \mathbf{n}_\star^\top \left( \begin{bmatrix} x^{(i)}_k \\ y^{(i)}_k \end{bmatrix} - \begin{bmatrix} x_{\mathrm{ref},\star} \\ y_{\mathrm{ref},\star} \end{bmatrix} \right),
\end{equation}
where $\star$ is the index of the nearest reference point and
\begin{equation}
\mathbf{t}_\star = \frac{\begin{bmatrix} x_{\mathrm{ref},\star+1} - x_{\mathrm{ref},\star-1} \\ y_{\mathrm{ref},\star+1} - y_{\mathrm{ref},\star-1} \end{bmatrix}}{\left\| \begin{bmatrix} x_{\mathrm{ref},\star+1} - x_{\mathrm{ref},\star-1} \\ y_{\mathrm{ref},\star+1} - y_{\mathrm{ref},\star-1} \end{bmatrix} \right\|}, \quad
\mathbf{n}_\star = \begin{bmatrix}-t_{y,\star} \\ t_{x,\star}\end{bmatrix}.
\end{equation}

Learning and predictive gains were annealed across iterations according to a $1/i$ schedule. The diminishing step size enabled large corrective updates in the initial laps and progressively smaller adjustments thereafter, improving robustness to noise and promoting stable tracking error convergence.

\subsection{Manipulator Experimental Setup} \label{appendix:manipulator}
\noindent \textbf{Reference Trajectory: }
The manipulator is required to follow a smooth, repetitive trajectory in the Cartesian plane. The desired end-effector path is a circular path defined as:
\begin{equation}
x_{\mathrm{ref}}(s) = 1.5 + 0.5 \cos(s), \ \
y_{\mathrm{ref}}(s) = 1.0 + 0.5 \sin(s)
\end{equation}
where $s \in [0, 2\pi]$ and the trajectory is sampled at $p$ uniformly-spaced discrete points to generate the reference for simulation. 

\noindent \textbf{Manipulator:}
The system is a 3-link planar manipulator operating in the XY-plane. The manipulator has 3 links, $l_1 = 1.0$m, $l_2 = 1.0$m, and $l_3 = 0.5$m. It has three joints with angles $\theta_1, \theta_2, \theta_3$, measured relative to the previous link. 

The forward kinematics of the manipulator, giving the i-th positions of the 3 joints and the end-effector, are
\begin{align}
(x_0,y_0) &= (0,0), \nonumber\\
(x_i,y_i) &= \sum_{k=1}^{i} l_k
\big(\cos(\textstyle\sum_{j=1}^{k}\theta_j),\,
\sin(\textstyle\sum_{j=1}^{k}\theta_j)\big).
\label{eq:forward_kinematics}
\end{align}

The Jacobian matrix, relating joint velocities to end-effector velocities, is given by \eqref{manipulator_jacobian}.
\begin{equation}
J =
\resizebox{0.8\columnwidth}{!}{$
\begin{bmatrix}
 -\displaystyle\sum_{k=1}^{3} l_k\, \sin\!\left(\sum_{s=1}^{k}\theta_s\right) &
 -\displaystyle\sum_{s=2}^{3} l_k\, \sin\!\left(\sum_{s=1}^{k}\theta_s\right) &
 -l_3\, \sin\!\left(\sum_{s=1}^{3}\theta_s\right) \\[6pt]
 \displaystyle\sum_{k=1}^{3} l_k\, \cos\!\left(\sum_{s=1}^{k}\theta_s\right) &
 \displaystyle\sum_{k=2}^{3} l_k\, \cos\!\left(\sum_{s=1}^{i}\theta_s\right) &
 l_3\, \cos\!\left(\sum_{s=1}^{3}\theta_s\right)
\end{bmatrix}
$}
\label{manipulator_jacobian}
\end{equation}

\noindent \textbf{Inverse Kinematics (IK): }
Inverse kinematics computes the joint angles $(\theta_1, \theta_2, \theta_3)$ required to reach a desired end-effector position $(x_{\mathrm{ref}}, y_{\mathrm{ref}})$. For a planar 3-link arm, exact IK can be complex; here the third joint orientation is ignored ($\theta_3 = 0$). The desired wrist position (excluding the last link) is
\begin{equation}
x_e = x_{\mathrm{ref}} - l_3, \quad y_e = y_{\mathrm{ref}}
\end{equation}

The first two joint angles are obtained geometrically:
\begin{align}
D &= \frac{x_e^2 + y_e^2 - l_1^2 - l_2^2}{2 l_1 l_2} \\
\theta_2 &= \tan^{-1}2\Big(\sqrt{1-D^2}, D\Big) \\
\theta_1 &= \tan^{-1}(y_e, x_e) - \tan^{-1}\Big(l_2 \sin \theta_2, l_1 + l_2 \cos \theta_2\Big) 
\end{align}

This provides an approximate initial feedforward joint trajectory for the manipulator to follow the desired path. At each iteration $i$, the manipulator executes the feedforward joint angles with unknown biases. Let \( s \) be a vector of \( p \) equally spaced points in \([0,1]\), then

\resizebox{\columnwidth}{!}{$
\begin{aligned}
b_1(s_t) &= 0.2 + 0.5 \sin(8 \pi s_t) + \epsilon_t^{(1)}, \quad \epsilon_t^{(1)} \sim N(0, 0.1^2),\\[1mm]
b_2(s_t) &= -0.25 + 0.1 \cos(6 \pi s_t) + \epsilon_t^{(2)}, \quad \epsilon_t^{(2)} \sim N(0, 0.2^2),\\[1mm]
b_3(s_t) &= 0.35 + 0.5 \exp\!\Bigg(\!-\frac{(s_t - 0.04)^2}{2(0.05)^2}\!\Bigg) + \epsilon_t^{(3)}, \quad \epsilon_t^{(3)} \sim N(0, 0.1^2).
\end{aligned}
$}
are the biases for each joint angle 
\begin{equation}
\theta_j^\text{actual} = \theta_j^\text{ff} + b_j(s), \quad j = 1,2,3
\end{equation}

\noindent \textbf{Tracking Error:} at the $i^{\text{th}}$ iteration is computed as the Euclidean norm of the position errors 
$
|| \boldsymbol{\mathsf{e}}_i||_2 = 
\left\Vert\begin{matrix} x_{\mathrm{ref}} - x_\text{ee}^{(i)} \\ y_{\mathrm{ref}} - y_\text{ee}^{(i)} \end{matrix}\right\Vert_2.
$
The learning gain was set to $L=0.25$, and the prediction gain to 
$K=0.3$, for all iterations, providing stable and consistent error reduction across iterations.

\subsection{Robot Experimental Setup} \label{appendix:robot}

The Hello Robot Stretch 3 was controlled using \textbf{Python} via the official \texttt{stretch\_body} API\footnote{\url{https://www.hello-robot.com/pages/stretch}}, interacting through the \texttt{Robot} class for high-level commands. The true end-effector positions were obtained from the robot's onboard joint encoders through the API (\texttt{robot.arm.status['pos']} and \texttt{robot.lift.status['pos']}) at each control step. The control loop executed at approximately 10 Hz(\texttt{time.sleep(0.1)} between commands), resulting in an end-to-end latency of roughly 0.1--0.15 s. The robot followed a smooth trajectory within the workspace limits of the arm and lift joints, as defined in the experiment.

\noindent \textbf{Reference Trajectory: }For $t \in [0, 2\pi)$,
\[
y(t) = 0.25 + 0.04 \, \sin(3t + \pi/2),\ \ 
z(t) = 0.45 + 0.02 \, \sin(2t).
\]

\noindent \textbf{Tracking Error:} at the $i^{\text{th}}$ iteration is computed as the Euclidean norm of the position errors 
$
|| \boldsymbol{\mathsf{e}}_i||_2 = 
\left\Vert\begin{matrix} y_{\mathrm{ref}} - y_\text{ee}^{(i)} \\ z_{\mathrm{ref}} - z_\text{ee}^{(i)} \end{matrix}\right\Vert_2.
$
 The learning gain was set to $L=0.05$, and the prediction gain to 
$K=1.5$, for all iterations, providing stable and consistent error reduction across iterations.

\end{document}